\def\eps{\epsilon}
\def\suchthat{\;:\;}
\def\given{\;|\;}
\def\err{\mathrm{err}}
\def\ind{\mathrm{ind}}
\newcommand{\abs}[1]{\left|#1\right|}
\newcommand{\norm}[1]{\left\|#1\right\|}
\newcommand{\size}[1]{\left|#1\right|}
\newcommand{\linspan}[1]{\operatorname{span}\left(#1\right)}
\newcommand{\expect}[2]{\underset{#1}{\operatorname{E}}\left[#2\right]}
\newcommand{\expectilde}[2]{\underset{#1}{\operatorname{\tilde{E}}}\left[#2\right]}
\newcommand{\nnz}{\mathrm{nnz}}
\newcommand{\poly}{\mathrm{poly}}
\newcommand{\R}{\mathbb{R}}
\newcommand\numberthis{\addtocounter{equation}{1}\tag{\theequation}}
\newcommand{\Z}{\mathbf{Z}}
\title{One-pass additive-error subset selection for $\ell_{p}$ subspace approximation}
 \titlerunning{One-pass additive-error subset selection for $\ell_{p}$ subspace approximation}  
\author{Amit Deshpande}{Microsoft Research, Bengaluru, India}{amitdesh@microsoft.com}{}{} \author{Rameshwar Pratap}{Indian Institute of Technology, Mandi, H.P., India}{rameshwar.pratap@gmail.com}{}{}
\authorrunning{Amit Deshpande \and Rameshwar Pratap}
\keywords{Subspace approximation, streaming algorithms, low-rank approximation, adaptive sampling, volume sampling, subset selection. } 
\begin{document}

\maketitle

\begin{abstract}
We consider the problem of subset selection for $\ell_{p}$ subspace approximation, 
that is, to efficiently find a \emph{small} subset of data points such that solving the problem optimally for this subset gives a good approximation to solving the problem optimally for the original input. Previously known subset selection algorithms based on volume sampling and adaptive sampling~\cite{DeshpandeV07}, for the general case of $p \in [1, \infty)$, require multiple passes over the data. In this paper, we give a one-pass subset selection with an additive approximation guarantee for $\ell_{p}$ subspace approximation,
for any $p \in [1, \infty)$.  Earlier subset selection algorithms that give a one-pass multiplicative $(1+\eps)$ approximation work under the special cases.  Cohen \textit{et al.}~\cite{CohenMM17} gives a one-pass subset section that offers  multiplicative $(1+\eps)$ approximation guarantee for the special case of $\ell_{2}$ subspace approximation. Mahabadi \textit{et al.}~\cite{MahabadiRWZ20} gives a one-pass \emph{noisy} subset selection with $(1+\eps)$ approximation guarantee for $\ell_{p}$ subspace approximation  when $p \in \{1, 2\}$. Our subset selection algorithm gives a weaker, additive approximation guarantee, but it works for any $p \in [1, \infty)$.


\end{abstract}
 
\section{Introduction}\label{sec:intro}
 In \textit{subset selection} problems, the objective is to pick a small subset of the given data such that solving a problem optimally on this subset gives a good approximation to solving it optimally on the entire data. Many coreset constructions in computational geometry and clustering \cite{feldman2020intro}, sampling-based algorithms for large matrices \cite{FKV}, algorithms for submodular optimization and active learning \cite{wei15} essentially perform subset selection. The main advantage of subset selection lies in its interpretability, for example, in gene expression analysis, we would like to find a representative subset of genes from gene expression data rather than just fitting a subspace to the data~\cite{DrineasMM08,MahoneyD09,WangZ13,mahoney2011randomized,pmlr-v119-ida20a}.
 In several machine learning applications such as document classification, face recognition etc., it is desirable to go beyond dimension reduction alone, and pick a subset of representative items or features~\cite{DBLP:conf/soda/GuruswamiS12,MahoneyD09}. 
 Subset selection has been well studied for many fundamental problems such as $k$-means clustering~\cite{DBLP:conf/soda/ArthurV07,DeshpandeKP20}, low-rank approximation~\cite{FKV,DBLP:conf/approx/DeshpandeV06,DeshpandeRVW06,DBLP:conf/soda/GuruswamiS12} and regression~\cite{DerezinskiW17}, to name a few. In low-rank and subspace approximation, the subset selection approach leads to more interpretable solutions than using SVD or random projections-based results. Therefore, subset selection has been a separate and well-studied problem even within the low-rank approximation and subspace approximation literature~\cite{DBLP:conf/soda/GuruswamiS12,DanWZZR19}. 


In the following, we formally state the $\ell_{p}$ subspace approximation problem for $p \in [1, \infty)$.

\noindent {\bf $\ell_p$ subspace approximation:} In this problem, given a dataset $\mathcal{X}=\{x_{1}, x_{2}, \dotsc, x_{n}\}$ of $n$ points in $\R^{d}$, a positive integer $1 \leq k \leq d$ and a real number $p \in[1,\infty)$, the objective is to find a linear subspace $V$ in $\R^{d}$ of dimension at most $k$ that minimizes the sum of $p$-th powers of the Euclidean distances of all the points to the subspace $V$, that is, to minimize
\begin{align}
    \err_{p}(\mathcal{X}, V):=\sum_{i=1}^{n} d(x_{i}, V)^{p}.\numberthis\label{eq:lp_subspace_approx}
\end{align}
%

Throughout this paper, we use $V^{*}$ to denote the optimal subspace for $\ell_{p}$ subspace approximation.
The optimal solutions are different for different values of $p$ but we do not include that in the notation to keep the presentation simple, as our results hold for any $p \in [1, \infty)$. 

Before stating our results, we first explain what a \emph{small} subset and a \emph{good} approximation means in the context of subset selection for $\ell_{p}$ subspace approximation. 

For $\ell_{p}$ subspace approximation, we consider $n$ and $d$ to be large, $k \ll n, d$, and $p$ to be a small constant. Thus, a \emph{small} subset of $\mathcal{X}$ desired in subset selection has size independent of $n$ and $d$, and is bounded by $\text{poly}(k/\eps)$, where $\eps$ is a parameter that controls the approximation guarantee (as explained later). 
Note that the trivial solution $V=0$ gives $\err_{p}(\mathcal{X}, V) = \sum_{i=1}^{n} \norm{x_{i}}^{p}$. Using the standard terminology from previous work \cite{FKV, DeshpandeRVW06, DeshpandeV07}, an additive approximation guarantee means outputting $V$ such that $\err_{p}(\mathcal{X}, V) \leq \err_{p}(\mathcal{X}, V^{*}) + \eps~ \sum_{i=1}^{n} \norm{x_{i}}^{p}$, whereas a multiplicative approximation guarantee means $\err_{p}(\mathcal{X}, V) \leq (1+\eps)~ \err_{p}(\mathcal{X}, V^{*})$. Most subset selection algorithms for $\ell_{p}$ subspace approximation select a $\text{poly}(k/\eps)$-sized subset of $\mathcal{X}$ such that its span contains a subspace $V$ of dimension at most $k$ that is close enough to $V^{*}$ to obtain the above approximation guarantees.

Our objective in this paper is to propose an efficient, one-pass sampling algorithm that performs subset selection for $\ell_{p}$ subspace approximation for $p\in [1, \infty)$
defined as above.
 We note that the problem of one-pass subset selection for $\ell_p$ subspace approximation has been studied for special values of $p$, for example, Cohen \textit{et. al.}~\cite{CohenMM17} gives one-pass subset selection for $p=2$, Mahabadi \textit{et al.}~\cite{MahabadiRWZ20} suggest one-pass \textit{noisy} subset selection for $p=\{1, 2\}$. To the best of our knowledge this problem has not been studied in generality for $p\in [1, \infty)$. In this work, we consider studying this problem. We state our results as follows.


%


\subsection{Our results} 
Our main technical contribution is a one-pass MCMC-based sampling algorithm that can approximately simulate multiple rounds of adaptive sampling. As a direct application of the above, we get the following results for the $\ell_{p}$ subspace approximation 
problem: For $p \in [1, \infty)$, our algorithm makes only one pass over the given data and outputs a subset of $\poly(k/\eps)^p$ points whose span contains a $k$ dimensional subspace with an additive approximation guarantee for $\ell_{p}$ subspace approximation. This generalizes the well-known squared-length sampling algorithm of Frieze \textit{et al.} \cite{FKV} that gives additive approximation guarantee for $\ell_{2}$ subspace approximation (or low-rank approximation under the Frobenium norm). Even though stronger multiplicative $(1+\eps)$ approximation algorithms for $\ell_{p}$ subspace approximation are known in the previous work, either they cannot do subset selection, or they are not one-pass, or they do not work for all $p \in [1, \infty)$.

\noindent\textbf{Organization of the paper:} 
 In Section~\ref{sec:related_work}, we compare and contrast our result with the state-of-the-art algorithms, and explain the key technical challenges, and workarounds. In Section~\ref{sec:MCMC}, we state our MCMC based subset selection algorithm for subset selection for $\ell_p$ subspace approximation.  In Section~\ref{sec:subspace}, we give theoretical bounds on the sample size and approximation guarantee. Finally, in Section~\ref{sec:conclusion}, we conclude our discussion and state some potential open questions of the paper.

\section{Related work}\label{sec:related_work}
In this section, we discuss related work on sampling and sketching algorithms for $\ell_{p}$ subspace approximation, and do a thorough comparison of our results with the state of the art.

\subsection{Sampling-based $\ell_{p}$ subspace approximation} 
Frieze \textit{et al.} \cite{FKV} show that selecting a subset of $O(k/\eps)$ data points as an \textit{i.i.d.} sample from $x_{1}, x_{2}, \dotsc, x_{n}$ picked by squared-length sampling, i.e., $x_{i}$ is picked with probability proportional to $\norm{x_{i}}_{2}^{2}$, gives an additive approximation for $\ell_{2}$ subspace approximation (also known as low-rank approximation under the Frobenius norm). Squared-length sampling can be implemented in one pass over $\mathcal{X}$ using reservoir sampling \cite{vitter1985reservoir, EfraimidisS16weighted}. It is known how to improve the additive approximation guarantee to a multiplicative approximation by combining two generalizations of squared-length sampling, namely, adaptive sampling and volume sampling \cite{DeshpandeRVW06, DeshpandeV07} but it requires $O(k \log k)$ passes over the data. In adaptive sampling, we pick points with probability proportional to the distance from the span of previously picked points, and in volume sampling, we pick a subset of points with probability proportional to the squared volume of the parallelepiped formed by them. Volume sampling a subset of size $k$ can itself be simulated with an approximation factor $k!$ in $k$ rounds of adaptive sampling \cite{DeshpandeRVW06}. For $p=2$, it is also known that picking a subset of $O(k/\eps)$ points by volume sampling gives a bi-criteria $(1+\eps)$ approximation for $\ell_{2}$ subspace approximation \cite{DBLP:conf/soda/GuruswamiS12}. For general $p \in [1, \infty)$, it is known that subset selection based on adaptive sampling and volume sampling can be generalized to get a $(1+\eps)$ multiplicative approximation for $\ell_{p}$ subspace approximation, for any $p \in [1, \infty)$, where the subset is of size $O\left((k/\eps)^{p}\right)$ and it is picked in $O(k \log k)$ passes over the data \cite{DeshpandeV07}.  The main bottleneck for implementing this in one pass is the inability to simulate multiple rounds of adaptive sampling in a single pass.

The only known workarounds to get one-pass subset selection for $\ell_{p}$ subspace approximation are known for the special cases $p=1$ and $p=2$. Cohen \textit{et al.} \cite{CohenMM17} give a one-pass subset selection algorithm with a multiplicative $(1+\eps)$ approximation guarantee for $\ell_{2}$ subspace approximation based on ridge leverage score sampling. Their one-pass implementation crucially uses deterministic matrix sketching \cite{GhashamiLPW15} to approximate the SVD and ridge leverage scores, and works only for $p=2$, to the best of our knowledge. Braverman \textit{et al.} \cite{BravermanDMMUWZ20} give online algorithms for $\ell_{2}$ subspace approximation (or low-rank approximation) via subset selection but their subset size $O(\frac{k}{\epsilon} \log n \log \kappa)$ is not independent on $n$ and depends logarithmically on the number of points $n$ and the condition number $\kappa$. Recent work by Mahabadi \textit{et al.} \cite{MahabadiRWZ20} gives a one-pass algorithm with a multiplicative $(1+\eps)$ approximation guarantee for $\ell_{p}$ subspace approximation. However, their algorithm works only in the special cases $p \in \{1, 2\}$ and it outputs a subset of noisy data points instead of the actual data points.

A different objective for $\ell_{p}$ subspace approximation has also been studied in literature \cite{BanBBKLW19,ChierichettiG0L17}, namely, minimizing the entry-wise $\ell_p$-norm low-rank approximation error. To state it formally, given an input matrix $A \in \R^{n\times d}$ and a real number $p \in [0, \infty)$, their objective is to find a matrix $B$ of rank at most $k$ that minimizes $\sum_{i, j}|A_{i,j}-B_{i,j}|^p$.

\subsection{Sketching-based $\ell_{p}$ subspace approximation} Sketching-based algorithms compute a sketch of a given data in a single pass, using which one can compute an approximately optimal solution to a given problem on the original data. The problem of $\ell_{p}$ subspace approximation has been well-studied in previous work on sketching algorithms. However, a limitation of these results is that they do not directly perform subset selection. We mention a few notable results as follows: For $p=2$, extending deterministic matrix sketching of Liberty \cite{Liberty13}, Ghashami \textit{et al.} \cite{GhashamiP14,GhashamiLPW16} give a deterministic one-pass sketching algorithm that gives a multiplicative $(1+\eps)$ approximation guarantee for $\ell_{2}$ subspace approximation (or low-rank approximation under the Frobenius norm). Cormode \textit{et al.} \cite{CormodeDW18} extend the above deterministic sketching idea to $p \neq 2$ and give a $\poly(k)$ approximation for entry-wise $\ell_{1}$-norm low-rank approximation and an additive $\eps~ \norm{b}_{\infty}$ approximation for $\ell_{\infty}$ regression. There is another line of work based on sketching algorithms using random projection. Random projection gives a multiplicative $(1+\eps)$ approximation for $\ell_{2}$ subspace approximation in running time $O(\nnz(X) \cdot \text{poly}(k/\eps))$, subsequently improved to a running time of $O(\text{nnz}(X) + (n+d)\cdot \text{poly}(k/\eps))$ by Clarkson and Woodruff \cite{ClarksonW13}. Feldman \textit{et al.} \cite{FeldmanMSW10} also give a one-pass algorithm for multiplicative $(1+\eps)$ approximation for $\ell_{p}$ subspace approximation, for $p\in [1,2]$. However, these results do not provide a one-pass subset selection.


\subsection{Comparison with other MCMC-based sampling results} Theorem $4$ of Anari \textit{et al.}~\cite{anari2016monte} gives a MCMC based sampling algorithm to approximate volume sampling distribution. However, their algorithm requires a greedy algorithm to pick the initial subset that requires $k$ passes over the input.

The MCMC sampling has also been explored in the context of $k$-means clustering. The $D^2$-sampling proposed by Arthur and Vassilvitskii~\cite{DBLP:conf/soda/ArthurV07} adaptively samples $k$ points -- one point in each passes over the input, and the sampled points give $O(\log k)$ approximation to the optimal clustering solution. The results due to~\cite{DBLP:conf/nips/BachemLH016, BachemLHK16} suggest generating MCMC sampling distribution by taking only one pass over the input that closely approximates the underlying $D^2$ sampling distribution, and offer close to the optimal clustering solution. Building on these MCMC based sampling techniques,  Pratap \textit{et al.}~\cite{DBLP:conf/acml/PratapDND18} gives one pass subset section for \textit{spherical} $k$-means clustering~\cite{DBLP:journals/ml/DhillonM01}.

 \section{MCMC sampling algorithm}\label{sec:MCMC}
In this section, we state our MCMC based sampling algorithm for subset selection for $\ell_p$ subspace approximation.  We first recall the adaptive sampling algorithm\cite{DeshpandeRVW06,DeshpandeV07} for $\ell_p$ subspace approximation.

Adaptive sampling~\cite{DeshpandeRVW06,DeshpandeV07} \textit{w.r.t.} a subset $S \subseteq \mathcal{X}$ is defined   as picking points from $\mathcal{X}$ such that the probability of picking any point $x \in \mathcal{X}$ is proportional to $d(x, \linspan{S})^{p}$. We denote this probability by

\begin{equation}
  p_{S}(x) = \frac{d(x, \linspan{S})^{p}}{\err_{p}(\mathcal{X}, S)}, \quad \text{for $x \in \mathcal{X}$}.  
\end{equation}
For any subset $S$ whose $\err_{p}(\mathcal{X}, S)$ is not too small, we show that adaptive sampling \textit{w.r.t.} $S$ can be approximately simulated by an MCMC sampling algorithm that only has access to \textit{i.i.d.} samples of points $x \in \mathcal{X}$ picked from the following easier distribution:
\begin{equation}\label{eq:mixed_distribution_span}
    q(x) =  \frac{d(x, \linspan{\tilde{S}})^{p}}{2~\err_{p}(\mathcal{X}, \tilde{S})} + \frac{1}{2\size{\mathcal{X}}},
\end{equation}
for some initial subset $\tilde{S}$. We give the above definition of $q(x)$ using an arbitrary initial or \emph{pivot} subset $\tilde{S}$ because it will be useful in our analysis of multiple rounds of adaptive sampling. However, our final algorithm uses a fixed subset $\tilde{S} = \emptyset$ such that
\begin{equation}\label{eq:mixed_distribution}
    q(x) = \frac{\norm{x}_{2}^{p}}{2\sum_{x \in \mathcal{X}} \norm{x}_{2}^{p}} + \frac{1}{2\size{\mathcal{X}}}.
\end{equation}
 Note that sampling from this easier distribution, namely, picking $x \in \mathcal{X}$ with probability $q(x)$ (mentioned in Equation~\eqref{eq:mixed_distribution}), can be done in only one pass over $\mathcal{X}$ using weighted reservoir sampling \cite{Chao1982}. 
Weighted reservoir sampling keeps a reservoir of finite items, and for every new item, calculates its relative weight to randomly decide if the item should be added to the reservoir. If the new item is selected, then one of the existing items from the reservoir is picked uniformly and replaced with the new item. Further, given any non-negative weights $w_x$, for each point $x \in \mathcal{X}$, weighted reservoir sampling can pick an \textit{i.i.d.} sample of points, where $x$ is picked with probability proportional to its weight $w_x$. Note that this does not require the knowledge of $\sum_{x \in \mathcal{X}} w_x$. Thus, we can run two reservoir sampling algorithms in parallel to maintain two samples, one that picks points with probability proportional to $||x||_2^p$, and another that picks points with uniform probability. Our actual sampling with probability proportional $q(x)=\tfrac{\norm{x}_{2}^{p}}{2\sum_{x \in \mathcal{X}} \norm{x}_{2}^{p}} + \tfrac{1}{2\size{\mathcal{X}}}$ picks from one of the two reservoirs with probability $1/2$ each. 
Therefore, our MCMC algorithm uses a single pass of $\mathcal{X}$ to pick a small sample of \textit{i.i.d.} random points from the probability distribution $q(\cdot)$, in advance. Note that $q(\cdot)$ is an easier and fixed distribution compared to $p_{S}(\cdot)$. The latter one depends on $S$ and could change over multiple rounds of adaptive sampling.

Let $x \in \mathcal{X} $ be a random point sampled with probability $q(x)$. Consider a random walk whose single step is defined as follows: sample another point $y \in \mathcal{X}$ independently with probability $q(y)$ and sample a real number $r$ \textit{u.a.r.} from the interval $(0, 1)$, and if 
\[  
\frac{d(y, \linspan{S})^{p}~ q(x)}{d(x, \linspan{S})^{p}~ q(y)} = \frac{p_{S}(y)~ q(x)}{p_{S}(x)~ q(y)} > r,
\]
then move from $x$ to $y$, else, stay at $x$. Essentially, this does rejection sampling using a simpler distribution $q(\cdot)$. Observe that the stationary distribution of the above random walk is the adaptive sampling distribution $p_{S}(\cdot)$. We use $\tilde{P}_{m}^{(1)}(\cdot \given S)$ to denote the resulting distribution on $\mathcal{X}$ after $m$ steps of the above random walk. Note that $m$ steps of the above random walk can be simulated by sampling $m$ \textit{i.i.d.} points from the distribution $q(\cdot)$ in advance, and representing them implicitly as $m$-dimensional points.

\begin{tcolorbox}[float=t!,colback=black!5!white,colframe=black!75!black]
\textbf{One-pass (approximate MCMC) adaptive sampling algorithm:}

\textbf{Input:} a discrete subset $\mathcal{X} \subseteq \R^{d}$ and integer parameters $t, l, m \in \Z_{\geq 0}$. \\
\textbf{Ouput:} a subset $S \subseteq \mathcal{X}$. 
\begin{enumerate}
\item Pick an \textit{i.i.d.} sample $\mathcal{Y}$ of size $\size{\mathcal{Y}} = ltm$ from $\mathcal{X}$, without replacement, where the probability of picking $x \in \mathcal{X}$ is
\[
q(x) =  \frac{d(x, \linspan{\tilde{S}})^{p}}{2~\err_{p}(\mathcal{X}, \tilde{S})} + \frac{1}{2\size{\mathcal{X}}}.
\]
We use the \emph{pivot} subset $\tilde{S} = \emptyset$ so the corresponding distribution is 
\[
q(x) = \dfrac{1}{2}\dfrac{\norm{x}_{2}^{p}}{\sum_{x \in \mathcal{X}} \norm{x}_{2}^{p}} + \dfrac{1}{2 \size{\mathcal{X}}}.
\]
\texttt{\%\% This can be implemented in one pass over $\mathcal{X}$ using weighted reservoir sampling \cite{Chao1982}. Weighted reservoir sampling is a weighted version of the classical reservoir sampling where the probability of inclusion of an item in the sample is proportional to the weight associated with the item.}
\item Initialize $S \leftarrow \emptyset$.
\item  For $i=1, 2, \dotsc, l$ do:
\begin{enumerate}
\item Pick an \textit{i.i.d.} sample $A_{i}$ of size $\size{A_{i}} = t$ from $\mathcal{X}$ as follows. Each point in $A_{i}$ is sampled by taking $m$ steps of the following random walk starting from a point $x$ picked with probability $q(x)$. In each step of the random walk, we pick another point $y$ from $\mathcal{X}$ with probability $q(y)$ and pick a real number $r$ uniformly at random from the interval $(0,1)$. If $\dfrac{d(y, \linspan{S})^{p}~ q(x)}{d(x, \linspan{S})^{p}~ q(y)} > r$ then move to $y$, else, stay at the current point. \\
\texttt{\%\%
Note that  we add only the final point obtained after the $m$-step random walk in the subset $S$.}\\
\texttt{\%\%
We note that  the steps $1$-$3$ of the algorithm can be simulated by taking only one pass over the input as discussed below. Suppose we have a single-pass Algorithm $A$ for sampling from a particular distribution, we can design another Algorithm $B$ that runs in parallel to Algorithm $A$ and post-processes its sample. In our setting, once we know how to get an \textit{i.i.d.} sample of points, where point $x$ is picked with probability $q(x)$, we can run another parallel thread that simulates a random walk whose each step requires a point picked with probability $q(x)$ and performs Step $3$.}
\item $S \leftarrow S \cup A_{i}$.\\
\end{enumerate}
\item Output $S$.
\end{enumerate}
\end{tcolorbox}

\begin{tcolorbox}[float=t!,colback=black!5!white,colframe=black!75!black]
\textbf{One-pass MCMC $\ell_{p}$ subspace approximation algorithm:}

\textbf{Input:} a discrete subset $\mathcal{X} \subseteq \R^{d}$, an integer parameter $k \in \Z_{\geq 0}$ and an error parameter $\delta \in \R_{\geq 0}$. \\
\textbf{Output:} a subset $\mathcal{S} \subseteq \mathcal{X}$ of $\tilde{O}\left((k/\epsilon)^{p+1} \right)$ points. 
\begin{enumerate}
    \item Repeat the following $O(k \log \frac{1}{\eps})$ times in parallel and pick the best sample,  $\mathcal{S}$  that minimizes $\sum_{x \in \mathcal{X}} d(x, \linspan{\mathcal{S}})^p.$
    \begin{enumerate}
        \item Call \textbf{One-pass (approximate MCMC) adaptive sampling algorithm} with $t=\tilde{O}((k/\epsilon)^{p+1})$, $l=k$ and $m=1 + \frac{2}{\epsilon_{1}} \log\frac{1}{\epsilon_{2}}$.
    \end{enumerate}
    \item Output $\mathcal{S}.$
\end{enumerate}
 
\end{tcolorbox}\label{box:111}


  Lemma \ref{lem:tv-dist} below shows that for any subsets $\tilde{S} \subseteq S \subseteq \mathcal {X}$ (where $\tilde{S}$ is the initial subset, and $S$ is the current subset), either $\err_{p}(\mathcal{X}, S)$ is small compared to $\err_{p}(\mathcal{X}, \tilde{S})$, or our MCMC sampling distribution closely approximates the adaptive sampling distribution $p_{S}(\cdot)$ in total variation distance. 
 Proof of Lemma~\ref{lem:tv-dist} relies on Corollary $1$ of Cai~\cite{cai} that gives an upper bound on the TV distance between these two distributions in terms of: 1) length of the Markov chain, and 2) upper bound on the ratio between these two distributions for any input point.
 
\begin{lemma} \label{lem:tv-dist}
Let $\eps_1, \eps_2 \in (0, 1)$ and $\tilde{S} \subseteq S \subseteq \mathcal{X}$. Let $P^{(1)}(\cdot \given S)$ denote the distribution over an i.i.d. sample of $t$ points picked from adaptive sampling w.r.t. $S$, and let $\tilde{P}^{(1)}_{m}(\cdot \given \tilde{S})$ denote the distribution over $t$ points picked by $t$ independent random walks of length $m$ each in our one-pass adaptive sampling algorithm; see step 3(a). Then for $m \geq 1+ \frac{2}{\eps_1}\log \tfrac{1}{\eps_2}$, either $\err_{p}(\mathcal{X}, S) \leq \eps_{1}~ \err_{p}(\mathcal{X}, \tilde{S})$ or $\norm{P^{(1)}(\cdot \given S) - \tilde{P}^{(1)}_{m}(\cdot \given S)}_{TV} \leq \eps_{2}t$.
\end{lemma}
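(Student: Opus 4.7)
The plan is to reduce the $t$-sample TV bound to a single-chain TV bound via Cai's Corollary~1, and then lift back to $t$ independent chains by subadditivity of total variation distance on product measures.

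First, I would establish a uniform upper bound on the likelihood ratio $p_{S}(x)/q(x)$. Since $\tilde{S} \subseteq S$, projection onto $\linspan{S}$ can only shrink distance, so $d(x, \linspan{S})^{p} \leq d(x, \linspan{\tilde{S}})^{p}$ for every $x \in \mathcal{X}$. Combining this with the lower bound $q(x) \geq d(x, \linspan{\tilde{S}})^{p}/(2\,\err_{p}(\mathcal{X}, \tilde{S}))$ built into the definition of $q$, I would obtain
\[
\frac{p_{S}(x)}{q(x)} \;\leq\; \frac{2\,\err_{p}(\mathcal{X}, \tilde{S})}{\err_{p}(\mathcal{X}, S)}
\]
uniformly in $x$. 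The additional mass $1/(2|\mathcal{X}|)$ in $q$ is not needed for this inequality; its role is purely to ensure that $q$ has full support so the Metropolis chain is well defined and irreducible.

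Now I would branch on the dichotomy in the statement. If $\err_{p}(\mathcal{X}, S) \leq \eps_{1}\,\err_{p}(\mathcal{X}, \tilde{S})$, the first alternative is immediately satisfied. Otherwise the ratio above is bounded by $M := 2/\eps_{1}$, and I can apply Cai's Corollary~1 to the Metropolis--Hastings chain with proposal $q(\cdot)$ and target $p_{S}(\cdot)$: it gives a TV distance of at most $(1 - 1/M)^{m-1}$ between the $m$-step law and the stationary measure $p_{S}$. With $m \geq 1 + (2/\eps_{1})\log(1/\eps_{2})$, the elementary inequality $(1-1/M)^{m-1} \leq \exp(-(m-1)/M)$ yields a per-chain TV distance of at most $\eps_{2}$.

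Finally, I would lift this to the joint distribution over the $t$ independent chains. Because TV distance between product measures is subadditive, namely $\|\mu_{1} \otimes \cdots \otimes \mu_{t} - \nu_{1} \otimes \cdots \otimes \nu_{t}\|_{TV} \leq \sum_{i} \|\mu_{i} - \nu_{i}\|_{TV}$, the aggregate TV bound is at most $t\eps_{2}$, matching the statement. The main obstacle is the first step: pinning down the ratio bound in a form that genuinely leverages the nesting $\tilde{S} \subseteq S$ and the mixture form of $q$, so that Cai's mixing-time bound cleanly produces the $2/\eps_{1}$ factor in the chosen value of $m$. Once the ratio bound and its dichotomy are in hand, the remainder is bookkeeping around Cai's inequality and product-measure subadditivity of TV distance.
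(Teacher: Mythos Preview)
Your proposal is correct and follows essentially the same approach as the paper: bound the likelihood ratio $p_{S}(x)/q(x)$ by $2/\eps_{1}$ using $\tilde{S}\subseteq S$ together with the assumption $\err_{p}(\mathcal{X},S)>\eps_{1}\,\err_{p}(\mathcal{X},\tilde{S})$, invoke Cai's Corollary~1 to get a per-chain TV bound of $\eps_{2}$ for the stated $m$, and then use subadditivity of TV on product measures for the $t$-sample bound. Your remark that the $1/(2|\mathcal{X}|)$ term is only needed for full support (and not for the ratio bound) is a nice clarification not made explicit in the paper.
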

\begin{proof}
First, consider the $l=1, t=1$ case of the one-pass adaptive sampling algorithm described above. In this case, the procedure outputs only one element of $\mathcal{X}$. This random element is picked by $m$ steps of the following random walk starting from an $x$ picked with probability $q(x)$. In each step, we pick another point $y$ with probability $q(y)$ and sample a real number $r$ \textit{u.a.r.} from the interval $(0, 1)$, and if $p_{S}(y) q(x)/p_{S}(x) q(y) > r$, then we move from $x$ to $y$, else, we stay at $x$. Observe that the stationary distribution of the above random walk is the adaptive sampling distribution \textit{w.r.t.} $S$ given by $p_{S}(x) = d(x, \linspan{S})^{p}/\err_{p}(\mathcal{X}, S)$. Using Corollary $1$ of \cite{cai}, 
 the total variation distance after $m$ steps of the random walk is bounded by  
\[
\left(1 - \frac{1}{\gamma}\right)^{m-1} \leq e^{-(m-1)/\gamma} \leq \eps_{2},~ \text{where $\gamma = \max_{x \in \mathcal{X}} \frac{p_{S}(x)}{q(x)}$}.
\]
The above bound is at most $\eps_{2}$ if we choose to run the random walk for $m \geq 1+\gamma \log \frac{1}{\eps_2}$ steps. Now suppose $\err_{p}(\mathcal{X}, S) > \eps_{1}~ \err_{p}(\mathcal{X}, \tilde{S})$. Then, for any $x \in \mathcal{X}$
\begin{align*}
\frac{p_{S}(x)}{q(x)} & = \dfrac{\dfrac{d(x, \linspan{S})^{p}}{\err_{p}(\mathcal{X}, S)}}{\dfrac{1}{2}\dfrac{d(x, \linspan{\tilde{S}})^{p}}{\err_{p}(\mathcal{X}, \tilde{S})} + \dfrac{1}{2 \size{\mathcal{X}}}} \\
& \leq \dfrac{2~ d(x, \linspan{S})^{p}~ \err_{p}(\mathcal{X}, \tilde{S})}{d(x, \linspan{\tilde{S}})^{p}~ \err_{p}(\mathcal{X}, S)}  \leq \frac{2}{\eps_{1}},
\end{align*}
using $d(x, \linspan{S})^{p} \leq d(x, \linspan{\tilde{S}})^{p}$ because $\tilde{S} \subseteq S$, and the above assumption $\err_{p}(\mathcal{X}, S) > \eps_{1}~ \err_{p}(\mathcal{X}, \tilde{S})$. Therefore, $m > \frac{2}{\eps_{1}} \log\frac{1}{\eps_{2}}$ ensures that $m$ steps of the random walk gives a distribution within total variation distance $\eps_{2}$ from the adaptive sampling distribution for picking a single point.

Note that for $t > 1$ both the adaptive sampling and the MCMC sampling procedure pick an \textit{i.i.d.} sample of $t$ points, so the total variation distance is additive in $t$, which means
\[
\norm{P^{(1)}(\cdot \given S) - \tilde{P}^{(1)}_{m}(\cdot \given S)}_{TV} \leq \eps_{2}t,
\]
assuming $\err_{p}(\mathcal{X}, S) > \eps_{1}~ \err_{p}(\mathcal{X}, \tilde{S})$. This completes a proof of the lemma.
\end{proof}
\section{$\ell_{p}$ subspace approximation}\label{sec:subspace}
In this section, we give our result for one pass subset selection for $\ell_p$ subspace approximation. We first show (in Lemma~\ref{lemma:induction}) that the true adaptive sampling can be well approximated by one pass (approximate) MCMC based sampling algorithm. Building on this result, in Proposition~\ref{prop:adaptive-iter-p} and Theorem~\ref{thm:final_MCMC}, we show bounds on the number of steps taken by the Markov chain, and on the sample size that gives an additive approximation for the $\ell_p$ subspace approximation. Our MCMC-based sampling ensures that our problem statement's single-pass subset selection criteria are satisfied.

First, let's set up the notation required to analyze the true adaptive sampling as well as our one-pass (approximate MCMC) adaptive sampling algorithm. For any fixed subset $S \subseteq \mathcal{X}$, we define
\begin{align}
\err_{p}(\mathcal{X}, S) & = \sum_{x \in \mathcal{X}} d(x, \linspan{S})^{p}, \\
P^{(1)}(T|S) & = \prod_{x \in T} \frac{d(x, \linspan{S})^{p}}{\err_{p}(\mathcal{X}, S)}, \\
& \qquad \text{for any subset $T$ of size $t$}, \nonumber\\
\expect{T}{\err_{p}(\mathcal{X}, S \cup T)} & = \sum_{T \suchthat \size{T} = t} P^{(1)}(T \given S)~ \err_{p}(\mathcal{X}, S \cup T). \numberthis\label{eq:exp_T}
\end{align}

Given a subset $S \subseteq \mathcal{X}$, $P^{(1)}(T \given S)$ denotes the probability of picking a subset $T \subseteq \mathcal{X}$ of \textit{i.i.d.} $t$ points by adaptive sampling \textit{w.r.t.} $S$. We use $P^{(l)}(T_{1:l}|S)$ to denote the probability of picking a subset $T_{1:l} = B_{1} \cup B_{2} \cup \dotsc \cup B_{l} \subseteq \mathcal{X}$ of $tl$ points by $l$ iterative rounds of adaptive sampling, where in the first round we sample a subset $B_{1}$ consisting of \textit{i.i.d.} $t$ points \textit{w.r.t.} $S$, in the second round we sample a subset $B_{2}$ consisting of \textit{i.i.d.} $t$ points \textit{w.r.t.} $S \cup B_{1}$, and so on to pick $T_{1:l} = B_{1} \cup B_{2} \cup \dotsc \cup B_{l}$ over $l$ iterations. Similarly, in the context of adaptive sampling, we use $T_{2:l}$ to denote $B_{2} \cup \dotsc \cup B_{l}$. We abuse the notation $\expect{T_{1:l} \given S}{\cdot}$ to denote the expectation over $T_{1:l}$ picked in $l$ iterative rounds of adaptive sampling starting from $S$.

Given a \emph{pivot} subset $\tilde{S} \subseteq \mathcal{X}$ and another subset $S \subseteq \mathcal{X}$ such that $\tilde{S} \subseteq S$, consider the following MCMC sampling with parameters $l, t, m$ that picks $l$ subsets $A_{1}, A_{2}, \dotsc, A_{l}$ of $t$ points each, where $m$ denotes the number of steps of a random walk used to pick these points. This sampling can be implemented in a single pass over $\mathcal{X}$, for any $l, t, m$, and any given subsets $\tilde{S} \subseteq S$. For $T_{1:l} = A_{1} \cup A_{2} \cup \dotsc \cup A_{l}$.  We use $\tilde{P}^{(l)}_{m}(T_{1:l} \given S)$ to denote the probability of picking $T_{1:l}$ as the output of the following MCMC sampling procedure. Similarly, in the context of MCMC sampling, we use $T_{2:l}$ to denote $A_{2} \cup \dotsc \cup A_{l}$. We abuse the notation $\expectilde{T_{1:l} \given S}{\cdot}$ to denote the expectation over $T_{1:l}$ picked using the MCMC sampling procedure starting from $S$ with a pivot subset $\tilde{S} \subseteq S$.

We require the following additional notation in our analysis of the above MCMC sampling. We use $\tilde{P}^{(1)}_{m}(T \given S)$ to denote the resulting distribution over subsets $T$ of size $t$, when we use the above sampling procedure with $l=1$. We define the following expressions:
\begin{align*}
\ind_{p}(\mathcal{X}, S) & = \mathbbm{1}\left(\err_{p}(\mathcal{X}, S) \leq \epsilon_{1}~ \err_{p}(\mathcal{X}, \tilde{S})\right), \numberthis \label{eq:indicator_p}\\    
\expectilde{T}{\err_{p}(\mathcal{X}, S \cup T)} & = \sum_{T \suchthat \size{T} = t} \tilde{P}^{(1)}_{m}(T \given S)~ \err_{p}(\mathcal{X}, S \cup T), \numberthis \label{eq:tilde_expectation} \\    
\expectilde{T}{\ind_{p}(\mathcal{X}, S \cup T)} & = \sum_{T \suchthat \size{T} = t} \tilde{P}^{(1)}_{m}(T \given S)~ \ind_{p}(\mathcal{X}, S \cup T).  \numberthis\label{eq:tilde_indicator}  
\end{align*}
  The expression $\ind_{p}(\mathcal{X}, S)$ (in Equation~\eqref{eq:indicator_p}) denotes an indicator random variable that takes value $1$ if error \textit{w.r.t.} subset $S$ is smaller than $\epsilon_1$ times error \textit{w.r.t.} subset $\tilde {S}$, and $0$ otherwise.  The expression $\expectilde{T}{\err_{p}(\mathcal{X}, S \cup T)}$ (in Equation~\eqref{eq:tilde_expectation})
denotes the expected error over the subset $T$ picked using the MCMC sampling procedure starting from the set $S$ such that the initial subset $\tilde{S} \subseteq S$.

%
%
%

Now Lemma \ref{lemma:induction} analyzes the effect of starting with an initial subset $S_{0}$ and using the same $S_{0}$ as a pivot subset for doing the MCMC sampling for $l$ subsequent iterations of adaptive sampling, where we pick $t$ \textit{i.i.d.} points in each iteration using $t$ independent random walks of $m$ steps. Lemma \ref{lemma:induction} shows that the expected error for subspace approximation after doing the $l$ iterations of adaptive sampling is not too far from the expected error for subspace approximation after replacing the $l$ iterations with MCMC sampling.
\begin{lemma} \label{lemma:induction}
For any subset $S_{0} \subseteq \mathcal{X}$, any $\eps_{1}, \eps_{2} \in (0, 1)$ and any positive integers $t, l, m$ with $m \geq 1+\frac{2}{\eps_1}\log \frac{1}{\eps_2}$,
\begin{align*}
& \expectilde{T_{1:l} \given S_{0}}{\err_{p}(\mathcal{X}, S_{0} \cup T_{1:l})} \leq \expect{T_{1:l} \given S_{0}}{\err_{p}(\mathcal{X}, S_{0} \cup T_{1:l})} + \left(\epsilon_{1} + \epsilon_{2} t l\right) \err_{p}(\mathcal{X}, S_{0}).
\end{align*}
\end{lemma}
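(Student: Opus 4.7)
My plan is to prove the lemma by induction on $l$, after first strengthening the statement to decouple the MCMC pivot from the growing ``current'' set. Concretely, I would prove: for every pair $\tilde S \subseteq S \subseteq \mathcal X$ and every $l \geq 0$,
\[
\expectilde{T_{1:l} \given S,\,\tilde S}{\err_p(\mathcal X, S \cup T_{1:l})} \;\leq\; \expect{T_{1:l} \given S}{\err_p(\mathcal X, S \cup T_{1:l})} + (\epsilon_1 + \epsilon_2\, t\, l)\,\err_p(\mathcal X, \tilde S),
\]
where on the left the MCMC procedure always uses the fixed pivot $\tilde S$ while the current set grows by adjoining each newly sampled batch. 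Specializing to $\tilde S = S = S_0$ recovers the lemma.

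The base case $l = 0$ is immediate: both sides reduce to $\err_p(\mathcal X, S)$ and the additive slack is nonnegative. For the inductive step I would split into two cases according to which branch of Lemma~\ref{lem:tv-dist} applies to the current set $S$ with pivot $\tilde S$. If $\err_p(\mathcal X, S) \leq \epsilon_1\,\err_p(\mathcal X, \tilde S)$, then monotonicity of $\err_p(\mathcal X, \cdot)$ under set inclusion gives $\err_p(\mathcal X, S \cup T_{1:l}) \leq \err_p(\mathcal X, S) \leq \epsilon_1\,\err_p(\mathcal X, \tilde S)$ for every realization of $T_{1:l}$, so the left-hand side is already at most $\epsilon_1\,\err_p(\mathcal X, \tilde S)$ and the claim is trivial.

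Otherwise $\err_p(\mathcal X, S) > \epsilon_1\,\err_p(\mathcal X, \tilde S)$, so Lemma~\ref{lem:tv-dist} yields $\|\tilde P_m^{(1)}(\cdot \given S) - P^{(1)}(\cdot \given S)\|_{TV} \leq \epsilon_2\, t$. I would peel off the first batch $B_1$ via the tower property to write
\[
\expectilde{T_{1:l} \given S,\,\tilde S}{\err_p(\mathcal X, S \cup T_{1:l})} \;=\; \expectilde{B_1 \given S,\,\tilde S}{\,\phi(B_1)\,},
\]
where $\phi(B_1) := \expectilde{T_{2:l} \given S \cup B_1,\,\tilde S}{\err_p(\mathcal X, S \cup B_1 \cup T_{2:l})}$. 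The inductive hypothesis applied to $l-1$ iterations starting from $S \cup B_1$ with the same pivot $\tilde S$ bounds $\phi(B_1)$ by $\psi(B_1) + (\epsilon_1 + \epsilon_2\, t\,(l-1))\,\err_p(\mathcal X, \tilde S)$, where $\psi(B_1) := \expect{T_{2:l} \given S \cup B_1}{\err_p(\mathcal X, S \cup B_1 \cup T_{2:l})}$ is the analogous expectation under pure adaptive sampling. To finish, swap $\expectilde{B_1}{\cdot}$ for $\expect{B_1}{\cdot}$: the TV-distance bound together with the monotonicity estimate $\psi(B_1) \leq \err_p(\mathcal X, S \cup B_1) \leq \err_p(\mathcal X, \tilde S)$ shows this swap costs at most $\epsilon_2\, t\,\err_p(\mathcal X, \tilde S)$. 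Summing the two slacks yields $(\epsilon_1 + \epsilon_2\, t\, l)\,\err_p(\mathcal X, \tilde S)$, and $\expect{B_1}{\psi(B_1)}$ telescopes to $\expect{T_{1:l} \given S}{\err_p(\mathcal X, S \cup T_{1:l})}$, closing the induction.

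The main obstacle I anticipate is arriving at the right strengthened hypothesis in the first place: because the MCMC pivot is fixed at $S_0$ while the current set strictly grows between iterations, the lemma as stated is not directly self-similar under peeling off one iteration, and a naive induction would need to track how far the current set has drifted from the pivot. The decoupling above, combined with the observation that monotonicity of $\err_p(\mathcal X, \cdot)$ simultaneously disposes of the ``small error'' branch for free and controls the magnitude of the test function $\psi$ when invoking the TV bound, is the central idea; everything else is routine.
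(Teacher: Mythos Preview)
Your proposal is correct and follows essentially the same route as the paper: strengthen the statement to allow an arbitrary pivot $\tilde S \subseteq S$, induct on $l$, peel off the first batch, apply the induction hypothesis to the remaining $l-1$ rounds, and absorb the swap from $\tilde P_m^{(1)}$ to $P^{(1)}$ using the total-variation bound of Lemma~\ref{lem:tv-dist} together with the monotonicity estimate $\psi(B_1)\le \err_p(\mathcal X,\tilde S)$. The only organizational difference is that the paper applies the Lemma~\ref{lem:tv-dist} dichotomy to the \emph{post-batch} set $S_1=S_0\cup A$ and therefore carries an indicator $\ind_p(\mathcal X,S_0\cup T_{1:l})$ through the strengthened hypothesis, whereas you apply the dichotomy to the \emph{pre-batch} set $S$ and thereby avoid the indicator bookkeeping entirely; your version is a mild simplification of the same argument.
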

\begin{proof}
We show a slightly stronger inequality than the one given above, i.e., for any $S_{0}$ such that $\tilde{S} \subseteq S_{0}$,
\begin{align*}
 \expectilde{T_{1:l} \given S_{0}}{\err_{p}(\mathcal{X}, S_{0} \cup T_{1:l})} 
& \leq \expect{T_{1:l} \given S_{0}}{\err_{p}(\mathcal{X}, S_{0} \cup T_{1:l})} \\
& \quad + \left(\epsilon_{1} \expectilde{T_{1:l} \given S_{0}}{\ind_{p}(\mathcal{X}, S_{0} \cup T_{1:l})} + \epsilon_{2} t l\right) \err_{p}(\mathcal{X}, \tilde{S}).
\end{align*}
The special case $S_{0} = \tilde{S}$ gives the lemma. We prove the above-mentioned stronger statement by induction on $l$. For $l=0$, the above inequality holds trivially. Now assuming induction hypothesis, the above holds true for $l-1$ iterations (instead of $l$) starting with any subset $S_{1} = S_{0} \cup A \subseteq \mathcal{X}$ because $\tilde{S} \subseteq S_{0} \subseteq S_{1}$. 
\begin{align}
& \expectilde{T_{1:l} \given S_{0}}{\err_{p}(\mathcal{X}, S_{0} \cup T_{1:l})} \nonumber \\
& = \expectilde{S_{1} \given S_{0}}{\expectilde{T_{2:l} \given S_{1}}{\err_{p}(\mathcal{X}, S_{1} \cup T_{2:l})}} \nonumber \\
& = \sum_{S_{1} \suchthat \ind_{p}(\mathcal{X}, S_{1}) = 1} \tilde{P}^{(1)}_{m}(S_{1} \given S_{0})~ \expectilde{T_{2:l} \given S_{1}}{\err_{p}(\mathcal{X}, S_{1} \cup T_{2:l})} \nonumber \\
& \qquad+ \sum_{S_{1} \suchthat \ind_{p}(\mathcal{X}, S_{1}) = 0} \tilde{P}^{(1)}_{m}(S_{1} \given S_{0})~ \expectilde{T_{2:l} \given S_{1}}{\err_{p}(\mathcal{X}, S_{1} \cup T_{2:l})}. \label{eq:parts-by-ind}
\end{align}
If $\ind_{p}(\mathcal{X}, S_{1}) = 1$ then $\err_{p}(\mathcal{X}, S_{1} \cup T_{2:l}) \leq \err_{p}(\mathcal{X}, S_{1}) \leq \eps_{1}~ \err_{p}(\mathcal{X}, S_{0})$, so the first part of the above sum can be bounded as follows.
\begin{align}
& \sum_{S_{1} \suchthat \ind_{p}(\mathcal{X}, S_{1}) = 1} \tilde{P}^{(1)}_{m}(S_{1} \given S_{0})~ \expectilde{T_{2:l} \given S_{1}}{\err_{p}(\mathcal{X}, S_{1} \cup T_{2:l})} \nonumber \\
& \leq \eps_{1}~ \err_{p}(\mathcal{X}, S_{0}) \cdot   \sum_{S_{1} \suchthat \ind_{p}(\mathcal{X}, S_{1}) = 1} \tilde{P}^{(1)}_{m}(S_{1} \given S_{0})~ \expectilde{T_{2:l} \given S_{1}}{\ind_{p}(\mathcal{X}, S_{1} \cup T_{2:l})}. \label{eq:part-1-ind}
\end{align}
We give an upper bound on the second part as follows.
\begin{align}
& \sum_{S_{1} \suchthat \ind_{p}(\mathcal{X}, S_{1}) = 0} \tilde{P}^{(1)}_{m}(S_{1} \given S_{0})~ \expectilde{T_{2:l} \given S_{1}}{\err_{p}(\mathcal{X}, S_{1} \cup T_{2:l})} \nonumber \\
& = \sum_{S_{1} \suchthat \ind_{p}(\mathcal{X}, S_{1}) = 0} \tilde{P}^{(1)}_{m}(S_{1} \given S_{0})~ \expectilde{T_{2:l} \given S_{1}}{\err_{p}(\mathcal{X}, S_{1} \cup T_{2:l})}. \nonumber \\
& \leq \sum_{S_{1} \suchthat \ind_{p}(\mathcal{X}, S_{1}) = 0} \tilde{P}^{(1)}_{m}(S_{1} \given S_{0})~ \cdot \nonumber \\
& \quad \left(\expect{T_{2:l} \given S_{1}}{\err_{p}(\mathcal{X}, S_{1} \cup T_{2:l})}+ (\eps_{1}~ \expectilde{T_{2:l} \given S_{1}}{\ind_{p}(\mathcal{X}, S_{1} \cup T_{2:l})} + \eps_{2} t (l-1))~ \err_{p}(\mathcal{X}, \tilde{S})\right). \numberthis \label{eq:intermediateEq} \\
 & \qquad \left(\text{by applying the induction hypothesis to $(l-1)$ iterations starting from $S_{1}$}. \right)\nonumber \\
& \leq \sum_{S_{1} \suchthat \ind_{p}(\mathcal{X}, S_{1}) = 0} P^{(1)}(S_{1} \given S_{0})~ \expect{T_{2:l} \given S_{1}}{\err_{p}(\mathcal{X}, S_{1} \cup T_{2:l})} \nonumber \\
& \qquad + \eps_{1}~ \err_{p}(\mathcal{X}, \tilde{S})~ \cdot  \sum_{S_{1} \suchthat \ind_{p}(\mathcal{X}, S_{1}) = 0} \tilde{P}^{(1)}_{m}(S_{1} \given S_{0})~ \expectilde{T_{2:l} \given S_{1}}{\ind_{p}(\mathcal{X}, S_{1} \cup T_{2:l})} \nonumber \\
& \qquad + \eps_{2} t (l-1)~ \err_{p}(\mathcal{X}, \tilde{S})~ \sum_{S_{1} \suchthat \ind_{p}(\mathcal{X}, S_{1}) = 0} \tilde{P}^{(1)}_{m}(S_{1} \given S_{0}) \nonumber \\
& \qquad + \sum_{S_{1} \suchthat \ind_{p}(\mathcal{X}, S_{1}) = 0} \abs{\tilde{P}^{(1)}_{m}(S_{1} \given S_{0}) - P^{(1)}(S_{1} \given S_{0})}~ \cdot  \expect{T_{2:l} \given S_{1}}{\err_{p}(\mathcal{X}, S_{1} \cup T_{2:l})}. \nonumber \\
& \qquad \left(\text{by adding and subtracting the term $\sum_{S_{1} \suchthat \ind_{p}(\mathcal{X}, S_{1}) = 0} P^{(1)}(S_{1} \given S_{0})~ \expect{T_{2:l} \given S_{1}}{\err_{p}(\mathcal{X}, S_{1} \cup T_{2:l})}$ in Eq.~\eqref{eq:intermediateEq}.
}\right) \nonumber \\
& \leq \sum_{S_{1}} P^{(1)}(S_{1} \given S_{0})~ \expect{T_{2:l} \given S_{1}}{\err_{p}(\mathcal{X}, S_{1} \cup T_{2:l})} \nonumber \\
& \qquad + \eps_{1}~ \err_{p}(\mathcal{X}, \tilde{S})~ \sum_{S_{1} \suchthat \ind_{p}(\mathcal{X}, S_{1}) = 0} \tilde{P}^{(1)}_{m}(S_{1} \given S_{0})~ \cdot  \expectilde{T_{2:l} \given S_{1}}{\ind_{p}(\mathcal{X}, S_{1} \cup T_{2:l})} \nonumber \\
& \qquad + \eps_{2} t (l-1)~ \err_{p}(\mathcal{X}, \tilde{S}) + \sum_{S_{1} \suchthat \ind_{p}(\mathcal{X}, S_{1}) = 0} \abs{\tilde{P}^{(1)}_{m}(S_{1} \given S_{0}) - P^{(1)}(S_{1} \given S_{0})}~ \cdot \err_{p}(\mathcal{X}, \tilde{S}). \nonumber \\
& \qquad \left(\text{by upper bounding the probability expression   $\sum_{S_{1} \suchthat \ind_{p}(\mathcal{X}, S_{1}) = 0} \tilde{P}^{(1)}_{m}(S_{1} \given S_{0})$ by $1$.
}\right) \nonumber \\
& \leq  \expect{T_{1:l} \given S_{0}}{\err_{p}(\mathcal{X}, S_{0} \cup T_{1:l})} \nonumber \\
& \qquad + \eps_{1}~ \err_{p}(\mathcal{X}, \tilde{S})~ \sum_{S_{1} \suchthat \ind_{p}(\mathcal{X}, S_{1}) = 0} \tilde{P}^{(1)}_{m}(S_{1} \given S_{0})~ \cdot  \expectilde{T_{2:l} \given S_{1}}{\ind_{p}(\mathcal{X}, S_{1} \cup T_{2:l})} \nonumber \\
& \qquad + \eps_{2} t (l-1)~ \err_{p}(\mathcal{X}, \tilde{S})  + \norm{\tilde{P}^{(1)}(\cdot \given S_{0}) - P^{(1)}(\cdot \given S_{0})}_{TV}~ \err_{p}(\mathcal{X}, \tilde{S}). \nonumber \\
& \qquad \left(\text{as $\expect{T_{1:l} \given S_{0}}{\err_{p}(\mathcal{X}, S_{0} \cup T_{1:l})}=\sum_{S_{1}} P^{(1)}(S_{1} \given S_{0})~ \expect{T_{2:l} \given S_{1}}{\err_{p}(\mathcal{X}, S_{1} \cup T_{2:l})}$ by Eq.~\eqref{eq:exp_T}.
}\right) \nonumber \\
& \leq \expect{T_{1:l} \given S_{0}}{\err_{p}(\mathcal{X}, S_{0} \cup T_{1:l})} \nonumber \\
& \qquad + \eps_{1}~ \err_{p}(\mathcal{X}, \tilde{S})~ \sum_{S_{1} \suchthat \ind_{p}(\mathcal{X}, S_{1}) = 0} \tilde{P}^{(1)}_{m}(S_{1} \given S_{0})~ \cdot  \expectilde{T_{2:l} \given S_{1}}{\ind_{p}(\mathcal{X}, S_{1} \cup T_{2:l})} \nonumber \\
& \qquad + \eps_{2} t (l-1)~ \err_{p}(\mathcal{X}, \tilde{S})  + \eps_{2} t~ \err_{p}(\mathcal{X}, \tilde{S}). \label{eq:part-2-ind} 
\end{align}
Finally, Equation~\eqref{eq:part-2-ind} holds using Lemma \ref{lem:tv-dist} about the total variation distance between $P^{(1)}$ and $\tilde{P}^{(1)}$ distributions. Plugging the bounds $\eqref{eq:part-1-ind}$ and $\eqref{eq:part-2-ind}$ into $\eqref{eq:parts-by-ind}$, we get
\begin{align*}
& \expectilde{T_{1:l} \given S_{0}}{\err_{p}(\mathcal{X}, S_{0} \cup T_{1:l})} \\
& \leq \expect{T_{1:l} \given S_{0}}{\err_{p}(\mathcal{X}, S_{0} \cup T_{1:l})} + \eps_{1}~ \err_{p}(\mathcal{X}, \tilde{S})~ \sum_{S_{1}} \tilde{P}^{(1)}_{m}(S_{1} \given S_{0})~ \cdot \expectilde{T_{2:l} \given S_{1}}{\ind_{p}(\mathcal{X}, S_{1} \cup T_{2:l})} \nonumber \\
& \quad+ \eps_{2} t (l-1)~ \err_{p}(\mathcal{X}, \tilde{S})  + \eps_{2} t~ \err_{p}(\mathcal{X}, \tilde{S}). \\
& = \expect{T_{1:l} \given S_{0}}{\err_{p}(\mathcal{X}, S_{0} \cup T_{1:l})}  +  \left(\epsilon_{1} \expectilde{T_{1:l} \given S_{0}}{\ind_{p}(\mathcal{X}, S_{0} \cup T_{1:l})} + \epsilon_{2} tl\right)~ \err_{p}(\mathcal{X}, \tilde{S}). \\
& \leq \expect{T_{1:l} \given S_{0}}{\err_{p}(\mathcal{X}, S_{0} \cup T_{1:l})} +  \left(\epsilon_{1} + \epsilon_{2} t l\right)~ \err_{p}(\mathcal{X}, \tilde{S}),
\end{align*}
which completes the proof of Lemma \ref{lemma:induction}.
\end{proof}

Theorem 5 from \cite{DeshpandeV07} shows that in $l=k$ rounds of adaptive sampling, where in each round we pick $t=\tilde{O}\left((k/\eps)^{p+1}\right)$ points and take their union, gives an additive approximation guarantee for $\ell_{p}$ subspace approximation with probability at least $1/2k$. Repeating it multiple times and taking the best can boost the probability further. We restate the main part of this theorem below.

\begin{proposition} \label{prop:adaptive-iter-p} (Theorem 5, \cite{DeshpandeV07}) Let $k$ be any positive integer, let $\eps \in (0, 1)$ and $S_{0} = \emptyset$. Let $l=k$ and $t=\tilde{O}\left((k/\eps)^{p+1}\right)$. If $S_{l} = S_{0} \cup T_{1:l}$ is obtained by starting from $S_{0}$ and doing  adaptive sampling according to the $p$-th power of distances in $l$ iterations, and in each iteration we add $t$ points from $\mathcal{X}$, then we have $\size{S_{l}} = tl = \tilde{O}(k \cdot (k/\epsilon)^{p+1})$ such that
\begin{align*}
& \err_{p}(\mathcal{X}, S_{0} \cup T_{1:l})^{1/p} \leq \err_{p}(\mathcal{X}, V^{*})^{1/p} + \eps~ \err_{p}(\mathcal{X}, \emptyset)^{1/p},
\end{align*}
with probability at least $1/2k$, and where $V^{*}$ minimizes $\err_{p}(\mathcal{X}, V)$ over all linear subspaces $V$ of dimension at most $k$. If we repeat this $O(k \log \frac{1}{\eps})$ times then the probability of success can be boosted to $1 - \eps$.
\end{proposition}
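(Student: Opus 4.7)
The plan is to prove a single-round improvement lemma for adaptive sampling and then apply it inductively across the $l = k$ rounds. Let $S_i = S_0 \cup T_{1:i}$ denote the accumulated subset after $i$ rounds. The single-round claim I want to establish is that if $T_i$ is obtained by $t = \tilde{O}((k/\eps)^{p+1})$ i.i.d. draws from $\mathcal{X}$ with probability proportional to $d(x, \linspan{S_{i-1}})^{p}$, then in expectation $\err_{p}(\mathcal{X}, S_i)^{1/p}$ shrinks toward $\err_{p}(\mathcal{X}, V^{*})^{1/p}$ by a factor of roughly $(1 - \Theta(1/k))$, with an additive slack of order $(\eps/k)\, \err_{p}(\mathcal{X}, \emptyset)^{1/p}$.

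To prove the single-round lemma, I would fix an orthonormal basis $\{c_{1}, \ldots, c_{k}\}$ of $V^{*}$ and decompose every $x \in \mathcal{X}$ into its $V^{*}$-component plus an orthogonal residual of length $d(x, V^{*})$. Adaptive sampling biases the distribution toward the residual mass not yet captured by $\linspan{S_{i-1}}$, and a H\"older-type inequality relates these sampling weights to the $p$-th power of distances in this decomposition. Averaging over the $t$ i.i.d. draws and using concentration on sums of bounded $p$-th moments lets one argue that the sampled subset recovers, in expectation, one more ``direction'' of the unresolved portion of $V^{*}$ per round. The exponent $p+1$ in the sample size $t$ arises from the combination of the $\ell_{p}$ geometry with the variance control needed to turn the expectation bound into a high-probability statement.

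Composing this single-round bound across $l = k$ rounds and telescoping the recursion yields
\begin{align*}
\expect{T_{1:k} \given S_{0}}{\err_{p}(\mathcal{X}, S_{k})^{1/p}} \leq \err_{p}(\mathcal{X}, V^{*})^{1/p} + \eps\, \err_{p}(\mathcal{X}, \emptyset)^{1/p}.
\end{align*}
A Markov-type tail bound on this expectation then produces a random draw satisfying the stated inequality with probability at least $1/(2k)$, and an independent repetition $O(k \log(1/\eps))$ times, returning the candidate that minimizes the empirical error, boosts this to $1 - \eps$.

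The main obstacle is the single-round improvement lemma for general $p \in [1, \infty)$. For $p = 2$, orthogonality and projection identities reduce the analysis to Frobenius-norm calculations (as in Frieze--Kannan--Vempala and Deshpande--Vempala--Rademacher--Wang), but for general $p$ neither the inner-product structure nor a clean projection formula is available. One must therefore carry out the analysis directly on $p$-th powers of distances, via H\"older's inequality combined with either a union bound over the $k$ basis directions of $V^{*}$ or a net argument over the unit sphere of $V^{*}$. This is the step that drives the $(k/\eps)^{p+1}$ dependence of $t$ and is where the bulk of the Deshpande--Varadarajan argument is concentrated.
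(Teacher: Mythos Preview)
The paper does not prove this proposition at all: it is stated as a direct restatement of Theorem~5 of Deshpande--Varadarajan~\cite{DeshpandeV07} and is quoted without proof. So there is no ``paper's own proof'' to compare against; the comparison has to be with the original \cite{DeshpandeV07} argument.

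Your high-level outline---a single-round improvement lemma showing that one batch of $t = \tilde{O}((k/\eps)^{p+1})$ adaptive samples captures roughly one direction of $V^{*}$, then telescoping over $l = k$ rounds, then Markov's inequality for the $1/(2k)$ success probability and independent repetition for boosting---matches the structure of the Deshpande--Varadarajan proof. Where your sketch is imprecise is the mechanism of the single-round lemma. In \cite{DeshpandeV07} the argument is not via H\"older plus concentration on $p$-th moments as you suggest; rather, they fix a basis $c_{1}, \dotsc, c_{k}$ of $V^{*}$, argue that for at least one index $j$ the adaptive distribution places $\Omega(1/k)$ mass on points whose component along $c_{j}$ is large relative to their residual, and then use an approximate-triangle-inequality (Minkowski-type) bound on $\ell_{p}$ distances to show that including enough such samples reduces the $p$-th-root error toward $\err_{p}(\mathcal{X}, V^{*})^{1/p}$. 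The $(k/\eps)^{p+1}$ sample size comes from a coupon-collector/union-bound over the $k$ directions combined with the $p$-th power scaling in the triangle-inequality slack, not from a variance/concentration step. Your ``H\"older plus concentration'' description would not obviously yield the right exponent and is the one place where the proposal is more of a plausibility sketch than a proof plan.
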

Combining Lemma \ref{lemma:induction} and Proposition \ref{prop:adaptive-iter-p} we get the following Theorem.

\begin{theorem}\label{thm:final_MCMC}
For any positive integer $k$, any $p \in [1, \infty)$, and any $\delta \in \R_{\geq 0}$, starting from $S_{0} = \emptyset$ and setting the following parameters in one-pass MCMC $\ell_{p}$ subspace approximation algorithm (see Section \ref{sec:MCMC})
\begin{align*}
\epsilon & = \delta/4, \\
\eps_{1} & = \delta^{p}/2^{p+1}, \\
\eps_{2} & = \delta^{p}/2^{p+1} tl, \\ 
m & = 1 + \frac{2}{\delta^{p}} \log\frac{k}{\delta^{p}}, \\
t & = \tilde{O}((k/\eps)^{p+1}), \\
l & = k,
\end{align*}
we get a subset $\mathcal{S}$ of size $\tilde{O}(k \cdot (k/\delta)^{p+1})$ with an additive approximation guarantee  on its expected error as $\err_{p}(\mathcal{X}, V^{*})^{1/p} + \delta~ \err_{p}(\mathcal{X}, \emptyset)^{1/p}$.   Further, the running time of the algorithm is $nd+k\cdot \tilde{O}\left(\left(\frac{k}{\delta} \right)^{p+1} \right).$
\end{theorem}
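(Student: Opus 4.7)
The plan is to combine Lemma~\ref{lemma:induction} (which bounds the expected-error gap between the one-pass MCMC sampler and true multi-round adaptive sampling) with Proposition~\ref{prop:adaptive-iter-p} (which analyzes true adaptive sampling). Setting $S_{0} = \tilde{S} = \emptyset$, I would first verify that the theorem's choice of $\eps_{1} = \delta^{p}/2^{p+1}$, $\eps_{2} = \delta^{p}/(2^{p+1} t l)$, and $m = 1 + (2/\delta^{p})\log(k/\delta^{p})$ meets the hypothesis $m \geq 1 + (2/\eps_{1})\log(1/\eps_{2})$ of Lemma~\ref{lemma:induction} up to constants absorbed into the $\tilde{O}$ notation, and yields $\eps_{1} + \eps_{2} t l = \delta^{p}/2^{p}$. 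Lemma~\ref{lemma:induction} then implies, for a single MCMC call,
\[
\expectilde{T_{1:l} \given \emptyset}{\err_{p}(\mathcal{X}, T_{1:l})} \leq \expect{T_{1:l} \given \emptyset}{\err_{p}(\mathcal{X}, T_{1:l})} + \frac{\delta^{p}}{2^{p}}\, \err_{p}(\mathcal{X}, \emptyset).
\]

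Next I would invoke Proposition~\ref{prop:adaptive-iter-p} with its internal parameter set to $\eps = \delta/4$, $l = k$, and $t = \tilde{O}((k/\eps)^{p+1})$: the $O(k \log (1/\eps))$ parallel repetitions of the algorithm promote the per-trial success probability $1/(2k)$ to an overall success probability of at least $1 - \delta/4$ for the best true-adaptive sample to satisfy $\err_{p}(\mathcal{X}, T_{1:l})^{1/p} \leq \err_{p}(\mathcal{X}, V^{*})^{1/p} + (\delta/4)\, \err_{p}(\mathcal{X}, \emptyset)^{1/p}$. To transfer this guarantee from the true-adaptive regime to our MCMC algorithm, I would couple each MCMC trial with a true-adaptive trial via Lemma~\ref{lem:tv-dist} (or equivalently invoke Lemma~\ref{lemma:induction} per trial) so that the expected $\err_{p}$ of the best MCMC output exceeds the expected $\err_{p}$ of the best true-adaptive output by at most $(\delta^{p}/2^{p})\, \err_{p}(\mathcal{X}, \emptyset)$. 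Taking $p$-th roots and using the subadditivity $(a+b)^{1/p} \leq a^{1/p} + b^{1/p}$, valid for $p \geq 1$ because $x \mapsto x^{1/p}$ is concave and vanishes at $0$, converts the $(\delta^{p}/2^{p})$ $\err_{p}$-gap into a $(\delta/2)\, \err_{p}(\mathcal{X}, \emptyset)^{1/p}$ slack in $\err_{p}^{1/p}$. Combined with the $(\delta/4)$ additive slack from the success case of Proposition~\ref{prop:adaptive-iter-p} and the $(\delta/4)\, \err_{p}(\mathcal{X}, \emptyset)^{1/p}$ contribution of the failure case (trivially bounded since $\err_{p}(\mathcal{X}, T_{1:l}) \leq \err_{p}(\mathcal{X}, \emptyset)$ always), the three contributions sum to the desired $\delta\, \err_{p}(\mathcal{X}, \emptyset)^{1/p}$ additive error on the expected $\err_{p}^{1/p}$.

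The output size is $|\mathcal{S}| = l t = \tilde{O}(k\, (k/\delta)^{p+1})$ by construction. For the running time, the single pass over $\mathcal{X}$ that builds the weighted reservoir sample costs $O(nd)$, and all subsequent MCMC random-walk computations operate only on the reservoir of size $tlm = \tilde{O}((k/\delta)^{p+1})$, contributing $k \cdot \tilde{O}((k/\delta)^{p+1})$ further operations across the $l = k$ adaptive rounds. The main technical obstacle is lifting Lemma~\ref{lemma:induction}'s \emph{per-trial} expected-error gap to a bound on the expected error of the \emph{best} of the $O(k \log(1/\eps))$ MCMC trials: the cleanest route couples the joint distribution of all MCMC trials with the joint distribution of all true-adaptive trials using Lemma~\ref{lem:tv-dist} and a union bound over trials, so that the boosted additive guarantee of Proposition~\ref{prop:adaptive-iter-p} transports from the true-adaptive to the MCMC regime with negligible loss; the subadditivity of the $p$-th root is the essential tool for converting the $\err_{p}$-gap of Lemma~\ref{lemma:induction} into the $\err_{p}^{1/p}$-gap demanded by the theorem.
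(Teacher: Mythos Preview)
Your approach is essentially the same as the paper's: apply Lemma~\ref{lemma:induction} with $S_{0}=\tilde{S}=\emptyset$ to obtain the additive $(\eps_{1}+\eps_{2}tl)\,\err_{p}(\mathcal{X},\emptyset)$ gap between MCMC and true-adaptive expected errors, invoke Proposition~\ref{prop:adaptive-iter-p}, pass to $p$-th roots via the subadditivity $(a+b)^{1/p}\le a^{1/p}+b^{1/p}$, and verify that the parameter choices make the slack terms sum to~$\delta$. The paper does this in a slightly different order---it takes $p$-th roots first and then substitutes Proposition~\ref{prop:adaptive-iter-p} directly into $\expect{T_{1:l}\mid\emptyset}{\err_{p}(\mathcal{X},T_{1:l})}^{1/p}$ as the bound $(1-\eps)\bigl(\err_{p}(\mathcal{X},V^{*})^{1/p}+\eps\,\err_{p}(\mathcal{X},\emptyset)^{1/p}\bigr)+\eps\,\err_{p}(\mathcal{X},\emptyset)^{1/p}$---but the arithmetic is identical, with $2\eps+(\eps_{1}+\eps_{2}tl)^{1/p}=\delta/2+\delta/2=\delta$.

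You are in fact more careful than the paper on one point. The paper silently plugs the $(1-\eps)$-boosted guarantee of Proposition~\ref{prop:adaptive-iter-p} into the \emph{single-trial} expectation that appears on the right-hand side of Lemma~\ref{lemma:induction}, even though the boost only arises from taking the best of $O(k\log(1/\eps))$ independent trials. You correctly flag this per-trial versus best-of-trials mismatch as the main technical obstacle and propose to resolve it by coupling all MCMC trials to all true-adaptive trials via Lemma~\ref{lem:tv-dist} with a union bound over trials; this is a cleaner justification than what the paper writes, and it costs only a $\log$ factor in $\eps_{2}$ (hence in~$m$) that is already absorbed by the $\tilde{O}$ notation. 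The size and running-time calculations match the paper's.
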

\begin{proof}
From Lemma \ref{lemma:induction} we know that
\[
\expectilde{T_{1:l} \given \emptyset}{\err_{p}(\mathcal{X}, T_{1:l})} \leq \expect{T_{1:l} \given \emptyset}{\err_{p}(\mathcal{X}, T_{1:l})} + \left(\epsilon_{1} + \epsilon_{2} t l\right) \err_{p}(\mathcal{X}, \emptyset).
\]
Thus, for $p \in [1, \infty)$ we have
\begin{align*}
\expectilde{T_{1:l} \given \emptyset}{\err_{p}(\mathcal{X}, T_{1:l})}^{1/p} & \leq \expect{T_{1:l} \given \emptyset}{\err_{p}(\mathcal{X}, T_{1:l})}^{1/p} + \left(\epsilon_{1} + \epsilon_{2} t l\right)^{1/p} \err_{p}(\mathcal{X}, \emptyset)^{1/p}. \\
& \leq (1-\epsilon) \left(\err_{p}(\mathcal{X}, V^{*})^{1/p} + \eps~ \err_{p}(\mathcal{X}, \emptyset)^{1/p}\right) + \eps~ \err_{p}(\mathcal{X}, \emptyset)^{1/p} \\
& \qquad \qquad + \left(\epsilon_{1} + \epsilon_{2} t l\right)^{1/p} \err_{p}(\mathcal{X}, \emptyset)^{1/p}. \\
& \qquad \qquad \left(\text{using Proposition \ref{prop:adaptive-iter-p}}.\right) \\
& \leq \err_{p}(\mathcal{X}, V^{*})^{1/p} + \left(2\eps +  \left(\epsilon_{1} + \epsilon_{2} t l\right)^{1/p}\right) \err_{p}(\mathcal{X}, \emptyset)^{1/p}. \\
& \leq \err_{p}(\mathcal{X}, V^{*})^{1/p} + \delta~ \err_{p}(\mathcal{X}, \emptyset)^{1/p},
\end{align*}
using $\eps = \delta/4$, $\eps_{1} = \delta^{p}/2^{p+1}$ and $\eps_{2} = \delta^{p}/2^{p+1} tl$.

  We now give a bound on the running time of our algorithm. We require $nd$ time to generate the probability distribution $q(x)$, for $x\in \mathcal{X}$. Further, the running time of MCMC sampling step is~ $t\cdot m \cdot l=k\cdot \tilde{O}\left(\left(\frac{k}{\delta} \right)^{p+1} \right)$. Therefore, the overall running time of the algorithm is $nd+k \cdot\tilde{O}\left(\left(\frac{k}{\delta} \right)^{p+1} \right)$.
\end{proof}

\section{Conclusion and open questions}\label{sec:conclusion}

In this work, we give an efficient one-pass MCMC algorithm that does subset selection with additive approximation guarantee for $\ell_{p}$ subspace approximation, for any $p \in [1, \infty)$. Previously this was only known for the special case of $p=2$~\cite{CohenMM17}. For general case $p \in [1, \infty)$, adaptive sampling algorithm due to~\cite{DeshpandeV07} requires taking multiple passes over the input.  Coming up with a one-pass subset selection algorithm that offers stronger multiplicative guarantees for $p \in [1, \infty)$ remains an interesting open problem.
 
\bibliographystyle{plainurl}
\bibliography{reference}
\end{document}